\newtheorem{lemma}{Lemma}
\newtheorem{theorem}{Theorem}
\title{An Approximation Algorithm for a Shortest Dubins Path Problem}
\author{Sivakumar Rathinam
    \affiliation{
    Associate Professor \\
    Department of Mechanical Engineering\\
     Texas A \& M University\\
    College station, Texas, U.S.A, 77843\\
    }
}
\author{Pramod Khargonekar
    \affiliation{Professor \\
    Department of Electrical and Computer Engineering\\
    University of Florida\\
    Gainesville, Florida, U.S.A., 32611\\
    }
}
\begin{document}
\maketitle

\begin{abstract}
The problem of finding the shortest path for a vehicle visiting a given sequence of target points subject to the motion constraints of the vehicle is an important problem that arises in several monitoring and surveillance applications involving unmanned aerial vehicles. There is currently no algorithm that can find an optimal solution to this problem. Therefore, heuristics that can find approximate solutions with guarantees on the quality of the solutions are useful. The best approximation algorithm currently available for the case when the distance between any two adjacent target points in the sequence is at least equal to twice the minimum radius of the vehicle has a guarantee of $3.04$. This article provides a new approximation algorithm which improves this guarantee to  $ 1+\frac{\pi}{3}\approx 2.04$. The developed algorithm is also implemented for hundreds of typical instances involving at most 30 points to corroborate the performance of the proposed approach. 
\end{abstract}

\section{Introduction}

Advances in sensing, robotics and wireless networks have enabled the use of teams of small Unmanned Vehicles (UVs) for environmental sensing applications including crop monitoring \cite{Geiser_1982, Jackson_1981}, ocean bathymetry \cite{Ferreira}, forest fire monitoring \cite{casbeer}, ecosystem management \cite{pollutant, wildfire}, and civil security applications such as border surveillance \cite{searchandrescue, Krishna2012cdc} and disaster management \cite{disastermanagement}. These applications frequently require vehicles to collect data such as visible/infra-red/thermal images, videos of specified target sites, and environmental data such as temperature, moisture, humidity using onboard sensors, and deliver them to a base station. To accomplish this requirement, small unmanned vehicles with motion and fuel constraints commonly visit a set of target points. There are many fundamental problems that arise here which relate to path planning, control and navigation. This article addresses an important path planning problem that arises in a typical surveillance mission involving a fixed wing Unmanned Aerial Vehicle (UAV) with minimum turning radius constraints.

Given a sequence of target points to visit on a ground plane, this article considers the problem of finding a shortest path passing through the points in the given sequence such that the radius of curvature of any point on the path is at least equal to a positive constant. The curvature constraint imposed on the path models the minimum turning radius of a fixed wing UAV. If the vehicle is traveling at constant speed, this constraint also models the bound on the maximum yaw rate of a fixed wing UAV. Onboard resources such as fuel are limited for small vehicles and therefore, minimizing the length of a vehicle's path can lead to using the resources as efficiently as possible. This problem is referred to as the Curvature constrained Shortest path Problem (CSP) in this article (refer to figure \ref{fig:csp}). The CSP is also one of the two important subproblems of the well known Dubins Traveling Salesman Problem which has received significant attention in the literature\cite{Medeiros2010,Orient2014,macharet2013efficient,macharet2012data,sujit2013route,kenefic2008finding,macharet2011nonholonomic,Ozguner2005,ma2006receding,srtase}.

\begin{figure}[h]
\centering{}
\includegraphics[width=3in]{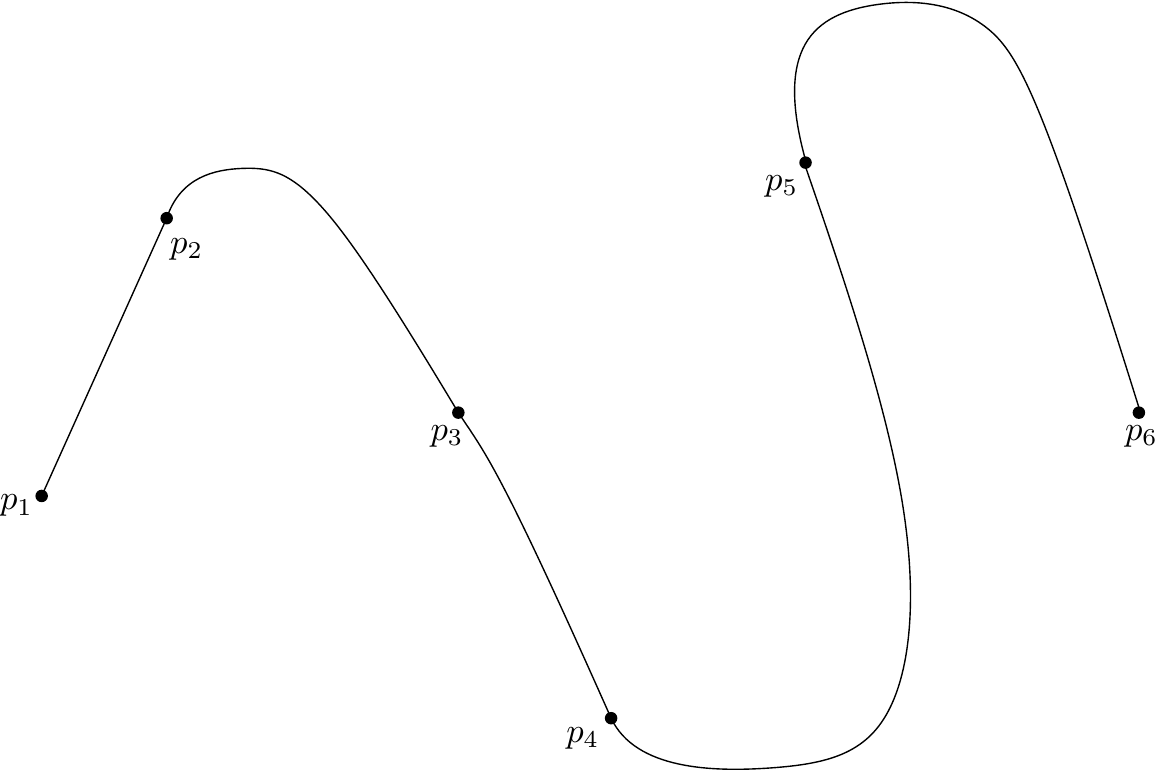}
\caption{A possible feasible path for the CSP visiting a sequence of points denoted by $(p_1,\cdots,p_6)$.}
\label{fig:csp}
\end{figure}

The CSP is a generalization of the two point shortest path problem solved by L.E. Dubins\cite{dubins} in 1957. Dubins\cite{dubins} addressed the problem of finding a shortest path for a vehicle to travel from a point at $(x_1,y_1)$ with heading angle $\theta_1$ to a point at $(x_2,y_2)$ with heading angle $\theta_2$ such that the radius of curvature at any point along the path is at least equal to $\rho \geq 0$. Dubins\cite{dubins} showed that any shortest path to this two point problem must consist of at most three segments where each of the segments must be an arc of radius $\rho$ (denoted by $C$) or a straight line segment (denoted by $S$). Specifically, any shortest path is of type $CSC$ or $CCC$, or a degenerate form of these paths. These paths are generally referred to as the Dubins paths in the literature. If the heading angle at each point is known, then the CSP simply reduces to finding an optimal Dubins path between any two adjacent target points in the given sequence. Therefore, solving the CSP requires us to find an optimal heading angle at each of the target points. This is a non-trival problem because the length of the CSP is a non-linear function of the heading angles at the target points.

There is currently no algorithm that can find an optimal solution to the CSP. Therefore, algorithms that can find feasible solutions with theoretical guarantees on the deviation of the solutions from the optimum are useful. The CSP was first considered by Lee et al. in \cite{LeeDubins} where they provide a 5.03-approximation algorithm. The approximation factor of this algorithm has been recently improved to ($2+\frac{2}{\pi} + \frac{\pi}{2}\approx 4.21$) in \cite{Kim2012}. An $\alpha$-approximation algorithm for a minimization problem is an algorithm that finds a feasible solution whose cost is at most $\alpha$ times the optimal cost for any instance of the problem. The factor $\alpha$ is also referred to as the approximation factor of the algorithm.

This article considers an important case of this problem where the distance between any two adjacent points is at least equal to $2\rho$. In many practical applications, the sensors onboard a vehicle covers a wide swath of the monitoring area, and as a result, the vehicle may not be required to visit points that are close\cite{srtase}.  For this case, a 3.04-approximation algorithm was provided by Rathinam et al. in \cite{srtase}. In this article, we improve on this bound and develop a new approximation algorithm with a factor of $1+\frac{\pi}{3}+\epsilon \approx 2.04 + \epsilon$ for any small constant $\epsilon>0$.

\section{Problem Statement}
Consider a sequence of $n$ points denoted by $(p_1,p_2,\cdots , p_n)$ on a plane. Let the position coordinates of the point $p_i$ be denoted as $(x_i,y_i)$. Without loss of generality, assume $n=3k$ where $k$ is any positive integer. The vehicle is first required to visit $p_1$, then $p_2$, and so on. The minimum turning radius of the vehicle is denoted as $\rho$. Let the Euclidean distance between any two adjacent points in the path visited by the vehicle be at least equal to $2\rho$. The objective of the CSP is to find a path such that the path visits each of the points in the order given by $(p_1,p_2,\cdots ,p_n)$, the radius of curvature at any point along the path is at least equal to $\rho$ and the length of the path is a minimum.

In the next section, we present a $(1+\epsilon)$-approximation algorithm for any given $\epsilon>0$ when $n=3$. We then use this three point algorithm to develop an approximation algorithm for the general case in the subsequent section.

\section{$(1+\epsilon)$-Approximation algorithm for 3 points}

Let the Euclidean distance between points $p_1$ and $p_2$ be denoted as $d_{12}$ and the Euclidean distance between points $p_2$ and $p_3$ be denoted as $d_{23}$. Without loss of generality, assume $p_1$ is at $(x_1,y_1)$, $p_2$ is at the origin and $p_3$ is at $(d_{23},0)$. Also, the case when $x_1\leq 0,y_1=0$ is not considered since the optimal solution in this case is just a straight line from $p_1$ to $p_3$. A curved segment $C$ may either require the vehicle to turn in the clockwise direction (this curved segment with a right turn is denoted as $R$) or turn in the counter clockwise direction (this curved segment with a left turn is denoted as $L$). \\

\begin{figure}[h]
\centering{}
\includegraphics[scale=0.85]{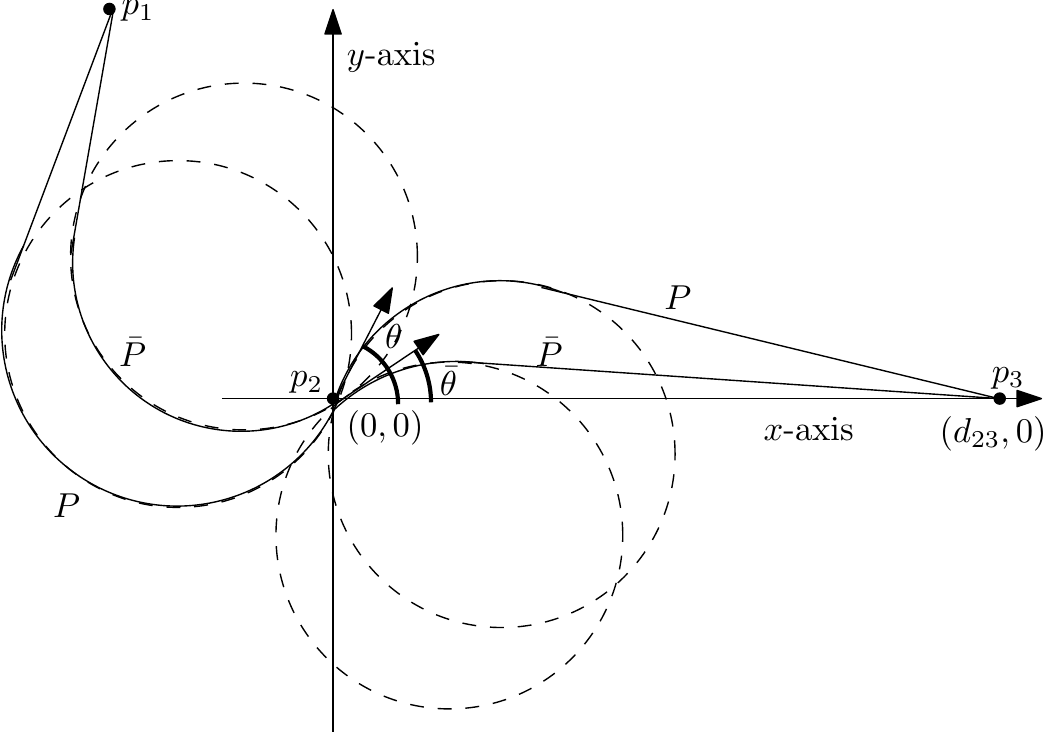}
\caption{Let $P$ denote a $SL$ path from $p_1$ to $p_2$ and a $RS$ path from $p_2$ to $p_3$. The heading angle $\theta$ corresponding to path $P$ at $p_2$ can be reduced by a small amount to say $\bar{\theta}$ to obtain a new path $\bar{P}$ with a shorter length. The difference between the angles $\theta$ and $\bar{\theta}$ is magnified just for illustration.}
\label{fig:proof1}
\end{figure}

\begin{figure}[h]
\centering{}
\includegraphics[scale=0.85]{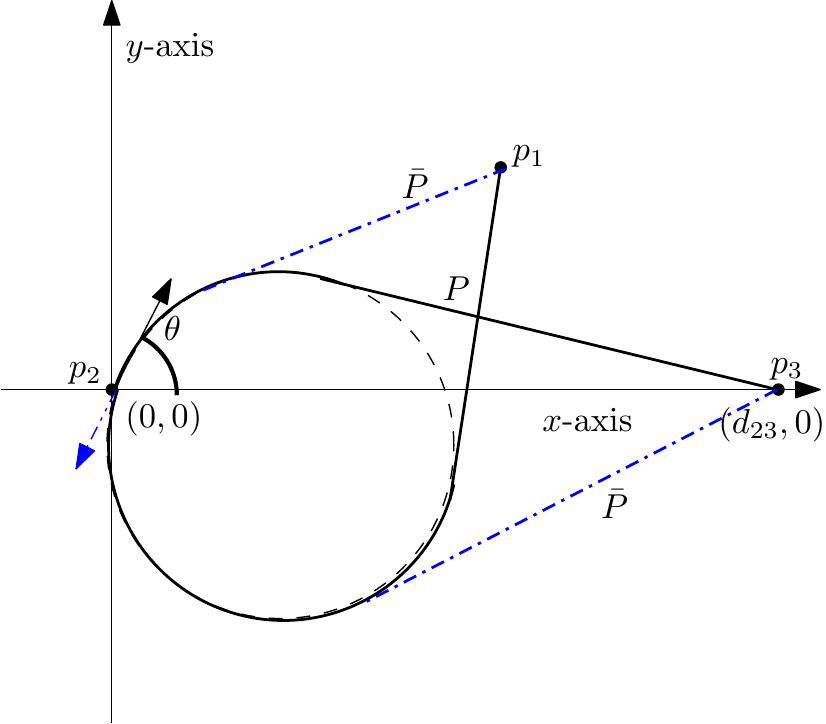}
\caption{In this example $y_1>0$. $P$ here denotes a SRS path from $p_1$ to $p_3$. The straight line segments of $P$ cross each other when $y_1>0$. Therefore, $SRS$ cannot be optimum because one can find a shorter path of type $SLS$ where the straight line segments (dotted blue lines) do not cross each other. It is easy to verify that the length of the $SLS$ path will be less than the length of the $SRS$ path when $y_1>0$.}
\label{fig:proof2}
\end{figure}

\begin{lemma}\label{lemma:3point}
The shortest path of bounded curvature through points $p_1$, $p_2$ and $p_3$ must be of type $SRS$ if $y_1 < 0$, $SLS$ if $y_1>0$ and either $SRS$ or $SLS$ if $y_1=0$. The point $p_2$ lies on the curved segment of the path. \\
\end{lemma}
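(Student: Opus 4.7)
The plan is to reduce the structural possibilities for the optimal path by exploiting the freedom in the endpoint headings at $p_1$ and $p_3$, then use two perturbation arguments (corresponding to Figures~\ref{fig:proof1} and~\ref{fig:proof2}) to isolate the correct turn direction at $p_2$.

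First I would split the shortest path at $p_2$, writing it as the concatenation of a sub-path from $p_1$ to $p_2$ and a sub-path from $p_2$ to $p_3$ which share a common heading $\theta_2$ at $p_2$. Since the initial heading at $p_1$ and the final heading at $p_3$ are free, each sub-path is a shortest Dubins path with one free endpoint angle. A known consequence of Dubins' theorem is that such a path uses at most two segments: the leg $p_1\to p_2$ is of type $SC$ (a straight segment followed by an arc of radius $\rho$ tangent to direction $\theta_2$ at $p_2$), and, by the symmetric argument, the leg $p_2\to p_3$ is of type $CS$. Concatenating, the optimal path is of type $SCCS$, and $p_2$ lies at the junction of the two inner arcs, hence on a curved segment.

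Next I would argue that the two inner arcs must have the same turning direction. If they were opposite, as in the $SL\cdot RS$ configuration of Figure~\ref{fig:proof1}, the two arcs share the tangent at $p_2$ but have centers on opposite sides. Perturbing $\theta_2$ by a small $\delta$ lengthens one arc while shortening the other, and the corresponding shifts of the attached straight segments (whose lengths depend smoothly on the tangent points) contribute only at second order; a first-order computation then shows the total length strictly decreases for one sign of $\delta$, contradicting optimality. Hence the two inner arcs turn the same way, and being same-handed, same-radius, and sharing the tangent at $p_2$, they have a common center and merge into a single arc. The path is therefore of type $SLS$ or $SRS$, with $p_2$ in the interior of the $C$ segment.

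Finally I would use the sign of $y_1$ to pick between the two types. Suppose $y_1>0$ and the path were of type $SRS$. The right-turning arc lies locally below its tangent at $p_2$, so the first straight segment must descend from $p_1$ down onto the arc while the second must climb from the arc back up to $p_3$ on the $x$-axis; a short planar-geometry check (Figure~\ref{fig:proof2}) forces these two straight segments to intersect. Replacing the $R$ arc at $p_2$ by the $L$ arc through $p_2$ with the matching tangent yields a feasible $SLS$ path whose straight segments are strictly shorter (they are the non-crossing legs of the quadrilateral formed by the four tangent points), contradicting optimality of $SRS$. By reflection $y_1<0$ forces $SRS$, and $y_1=0$ admits either type by symmetry.

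The step I expect to be the main obstacle is the first-order perturbation ruling out $LR$ and $RL$ configurations at $p_2$: one has to parameterize the two opposite-handed arcs together with their tangent straight segments, differentiate the total length with respect to $\theta_2$, and verify that the resulting first-order term has a definite sign. The crossing argument in the last paragraph is more routine but does rely on the hypothesis $d_{12},d_{23}\ge 2\rho$, which prevents degenerate configurations in which the arc could wrap past the two straight segments.
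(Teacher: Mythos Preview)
Your proposal is correct and follows essentially the same three-step route as the paper: reduce each leg to $SC$/$CS$ using the free endpoint at $p_1$ and $p_3$, rule out opposite-handed arcs at $p_2$ by the perturbation of $\theta_2$ (the paper's Figure~\ref{fig:proof1}), and then select $L$ versus $R$ by the crossing comparison (Figure~\ref{fig:proof2}). One small correction: the hypothesis $d_{12},d_{23}\ge 2\rho$ is already needed in your first step to force the $SC$/$CS$ form (this is where the paper invokes it, citing \cite{techreport94}), not only in the final crossing argument; without it the free-endpoint Dubins sub-path need not be $SC$.
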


\begin{proof}
Given the heading angle $\theta$ at $p_2$, as $d_{12}\geq 2\rho$ and $d_{23}\geq 2\rho$, it is known \cite{techreport94} that the shortest path from $p_1$ to $p_2$ must be of type $SC$ while the shortest path from $p_2$ to $p_3$ must be of type $CS$.  For example, the path from $p_1$ to $p_2$ can be of type $SL$ or $SR$. However, it is not optimal to arrive at $p_2$ in the counter clockwise direction and leave $p_2$ in the clockwise direction as shown in figure \ref{fig:proof1}. Similarly, it is also not optimal to arrive at $p_2$ in the clockwise direction and leave $p_2$ in the counter clockwise direction. Therefore, $SCS$ is the only possibility where the vehicle is turning either left or right in the curved segment while visiting $p_2$.  \\

In addition, $SRS$ cannot be optimal when the $y$ coordinate of point $p_1$ is greater than zero (refer to figure \ref{fig:proof2}). Similarly, $SLS$ cannot be optimal when the $y$ coordinate of point $p_1$ is less than zero. Hence proved.  $\blacksquare$
\end{proof}

For the three point problem, if the heading angle at the first point is given, authors in \cite{ma2006receding} prove a similar result by formulating the shortest path problem as an optimal control problem and using Pontryagin's minimum principle. Apart from the difference in our approaches and the fact that we do not have any heading angle constraint at the first point, our work differs from the need to also understand the computational complexity of finding an optimal path. The next theorem shows that given any small $\epsilon>0$, the number of steps required to find an optimum is in the order of $\log_2{\frac{1}{\epsilon}}$ (which is a constant). \\

\begin{theorem}
Given any $\epsilon>0$, a path within $(1+\epsilon)$ times the length of the shortest path can be found in polynomial time for three points.
\end{theorem}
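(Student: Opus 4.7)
The plan is to reduce the three-point CSP to a one-dimensional search over the heading angle at $p_2$ and then to apply either a simple grid search or bisection.

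First, using Lemma \ref{lemma:3point}, I would fix the type of the optimal path ($SRS$ if $y_1<0$, $SLS$ if $y_1>0$, and try both types if $y_1=0$) and parameterize the path by the heading angle $\theta$ of the vehicle at $p_2$. Once $\theta$ is fixed, the path is determined: the curved segment is the unique arc of radius $\rho$ passing through $p_2$ with tangent direction $\theta$ and the prescribed turning sense, whose center is $c(\theta)=\rho(\sin\theta,-\cos\theta)$ (or its reflection for $SLS$); the two straight segments are the outer common tangents from $p_1$ and from $p_3$ to this arc. Using the tangent-length formula $\sqrt{|p_i-c(\theta)|^2-\rho^2}$ together with the two arc-length terms measured from the tangent points to $p_2$, one obtains a closed-form expression $L(\theta)$ for the total length. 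The hypothesis $d_{12},d_{23}\ge 2\rho$ guarantees $|p_i-c(\theta)|\ge\rho$ so the tangents exist for every admissible $\theta$.

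Second, I would bound $|L'(\theta)|\le K$ on the feasibility interval for $\theta$ directly from this closed form (the center $c(\theta)$ moves at speed $\rho$ on a circle and the tangent-length and arc-angle formulas depend smoothly on $c$). Combined with the trivial lower bound $L(\theta^*)\ge d_{12}+d_{23}\ge 4\rho$, a uniform grid of size $N=O(K/(\rho\epsilon))=O(1/\epsilon)$ evaluations of $L$ yields a heading $\hat\theta$ with $L(\hat\theta)\le (1+\epsilon)L(\theta^*)$, which is polynomial time. To reach the sharper $O(\log_2\frac{1}{\epsilon})$ iteration count hinted at in the paragraph preceding the theorem, I would either (i) prove that $L(\theta)$ is unimodal on each of the relevant sub-intervals and apply golden-section/ternary search, or (ii) differentiate the closed form and isolate the zeros of $L'(\theta)$ in a constant number of brackets, bisecting inside each.

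The main obstacle I expect is the unimodality (or equivalently, the control over extra critical points of $L$) needed to justify bisection rather than grid search. The tangent-length and arc-length terms are each smooth but their monotonicities can compete, so ruling out spurious stationary points requires writing $L'(\theta)$ explicitly and checking its sign on the feasible range; if unimodality fails on the full admissible interval, a partition into $O(1)$ monotone sub-intervals suffices, and ternary search on each restores the $O(\log_2\frac{1}{\epsilon})$ count. The remaining steps — producing the closed form, confirming $O(1)$ time per function evaluation, and translating the Lipschitz or unimodality bound into the multiplicative $(1+\epsilon)$ guarantee using $L(\theta^*)\ge 4\rho$ — are routine calculations.
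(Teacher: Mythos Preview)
Your plan is correct for the theorem as literally stated (polynomial time), but it diverges from the paper's argument in the central step. You propose to bound $|L'(\theta)|$ uniformly and run a grid search of size $O(1/\epsilon)$, using the lower bound $L(\theta^*)\ge d_{12}+d_{23}\ge 4\rho$ to turn the additive grid error into a multiplicative $(1+\epsilon)$. That is a valid and more elementary route; it avoids any structural analysis of $L$. The paper instead writes the derivative explicitly as $L'(\theta)=-d_{12}\sin\psi+d_{23}\sin\phi$ (via the $RS$ length-derivative formulas of \cite{Rathinam_lb2015}), identifies an interval $[\theta_{12}^*,\beta]\cap[0,\theta_{23}^*]$ on which $d_{12}\sin\psi$ is strictly decreasing and $d_{23}\sin\phi$ is strictly increasing, and then checks that the second derivative is strictly positive there. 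This yields a \emph{unique} interior minimizer and justifies interval bisection with $O(\log_2\tfrac{1}{\epsilon})$ iterations---exactly the unimodality you flag as the main obstacle, resolved directly rather than sidestepped.

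Two points worth noting if you pursue your grid-search version. First, the paper records that the length function is discontinuous at $\theta=0$ and $\theta=\beta$ (where the $SR$ or $RS$ geometry degenerates); your Lipschitz bound must therefore be stated and used on each continuous piece separately, not on all of $[0,2\pi]$. Second, your claim that tangents exist for every admissible $\theta$ is fine for existence, but the arc-length contribution to $L(\theta)$ can jump by $2\pi\rho$ when the tangent point crosses $p_2$ on the circle, which is precisely the source of those discontinuities; be explicit about which branch you are on. Neither issue is fatal---a finite partition into $O(1)$ continuous intervals suffices---but it is not quite as ``routine'' as your last paragraph suggests.
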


\begin{proof}
From lemma \ref{lemma:3point}, the shortest path is of type $SLS$ or $SRS$. Here, we show that a path of type $SRS$ with length $(1+\epsilon)$ times the optimal length can be found in polynomial time when $y_1<0$. A similar proof can also be provided for $SLS$. Refer to figure \ref{fig:RS} for an illustration of the $SRS$ path when $y_1<0$ and all the notations for the involved angles. Note that the angles $\phi$ and $\psi$ shown in figure \ref{fig:RS} are functions of $\theta$. Let $\mathfrak{D}_1$ be the length of the $SR$ path from $p_1$ to $p_2$ as a function of the heading angle $\theta$ at $p_2$. Let $\mathfrak{D}_2$ be the length of the $RS$ from $p_2$ to $p_3$ as a function of the heading angle $\theta$ at $p_2$. The function $\mathfrak{D}_1(\theta) + \mathfrak{D}_2(\theta)$ is discontinuous at $\theta=0$ and $\theta=\beta$. At all other values of $\theta$, both the length functions $\mathfrak{D}_1(\theta)$ and $\mathfrak{D}_2(\theta)$ are continuously differentiable \cite{Rathinam_lb2015}. Using the results for the derivatives of the length function of an $RS$ path in \cite{Rathinam_lb2015}, wherever the derivatives exist, we have,
\begin{align}
\frac{d\mathfrak{D_1}}{d\theta}  + \frac{d\mathfrak{D_2}}{d\theta} & = -{d_{12}}\sin{\psi}+{d_{23}}\sin{\phi}. \nonumber
\end{align}
$\frac{d\mathfrak{D_1}}{d\theta}  + \frac{d\mathfrak{D_2}}{d\theta} =0$ implies that the sum of the length functions can reach a minimum or a maximum when ${d_{12}}\sin{\psi}={d_{23}}\sin{\phi}$. Now, ${d_{12}}\sin{\psi}$ reaches a maximum value of $2\rho$ when $\theta=\theta_{12}^*$ (refer to figure \ref{fig:RS_3}) and reduces to zero as $\theta$ is further increased to $\beta$. Without loss of generality, assume that $-\pi \leq \theta_{12}^{*} \leq \pi$. It is easy to verify that  ${d_{12}}\sin{\psi}$ is a strictly decreasing function when $\theta \in [\theta_{12}^*,\beta]$. This is because $0<\psi < \frac{\pi}{2}$ and ${ d_{12}\cos{\psi} >L_{12}}$ for $\theta \in (\theta_{12}^*,\beta)$. Therefore, using the results in \cite{Rathinam_lb2015}, we get,
\begin{align}
\frac{d ({d_{12}}\sin{\psi}) }{d\theta} & = -{d_{12}}\cos{\psi} (\frac{ d_{12}\cos{\psi} -L_{12}}{L_{12}}) < 0. \nonumber
\end{align}

Similarly, ${d_{23}}\sin{\phi}$ reaches a minimum value of 0 when $\theta=0$ and increases to $2\rho$ when $\theta$ becomes equal to $\theta_{23}^*$ (refer to figure \ref{fig:RS_2}). Also, ${d_{23}}\sin{\phi}$ is a strictly increasing function when $\theta \in [0,\theta_{23}^*]$.

Now, the set $[\theta_{12}^*,\beta] \cap  [0,\theta_{23}^*]$ is always nonempty because $0 \leq \beta$ and $\theta_{12}^* \leq \frac{\pi}{2} \leq \theta_{23}^*\leq \pi$. Therefore, one can verify that for $\theta \in [\theta_{12}^*,\beta] \cap  [0,\theta_{23}^*]$, the functions ${d_{12}}\sin{\psi}$ and ${d_{23}}\sin{\phi}$ must intersect at some angle $\theta=\theta^*$. In addition, for any $\theta \in (\theta_{12}^*,\beta) \cap  (0,\theta_{23}^*)$,

\begin{align*}
\frac{d^2\mathfrak{D_1}}{d\theta^2}  + \frac{d^2\mathfrak{D_2}}{d\theta^2}  & = {d_{12}}\cos{\psi} (\frac{ d_{12}\cos{\psi} -L_{12}}{L_{12}}) ~ + \\ & \quad \quad  {d_{23}}\cos{\phi} (\frac{ d_{23}\cos{\phi} -L_{23}}{L_{23}}) \\ & >0.
\end{align*}

Hence, $\mathfrak{D}_1(\theta)+\mathfrak{D}_2(\theta)$ reaches an unique minimum at $\theta=\theta^*$. This minimum can be found using an interval bisection algorithm with the number of iterations of the algorithm in the order of $\log_2(\frac{1}{\epsilon})$ for any given $\epsilon>0$ \cite{Nemirovski}. $\blacksquare$
\end{proof}

\begin{figure}[h]
\centering{}
\includegraphics[scale=0.9]{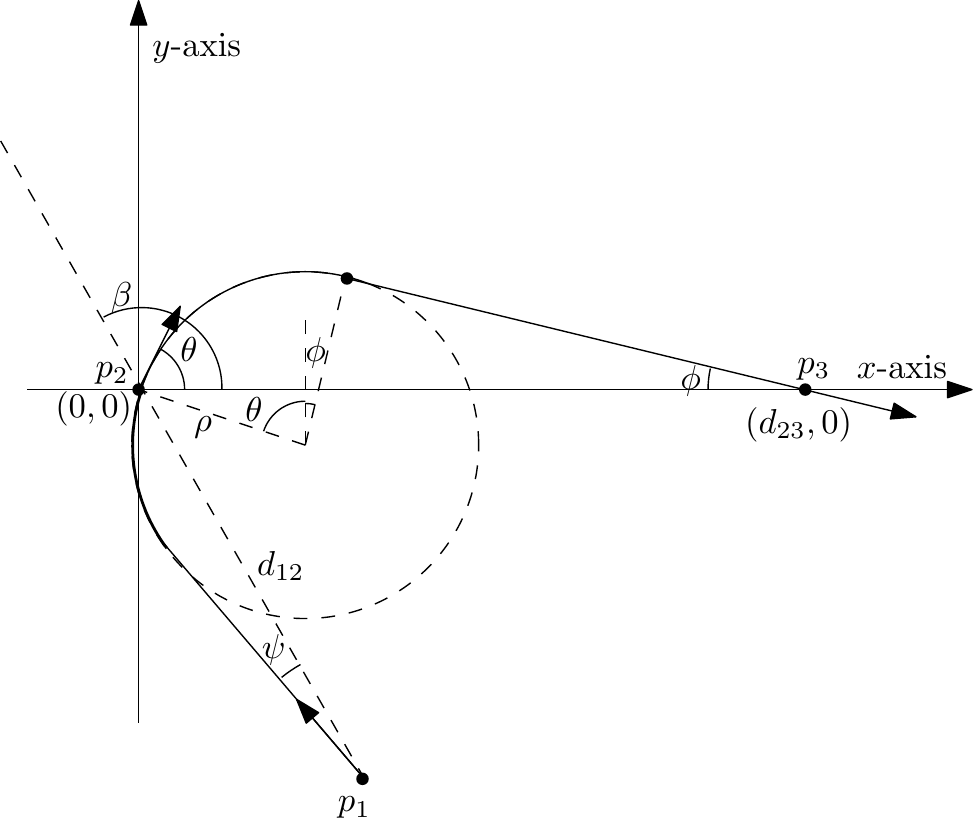}
\caption{A $SRS$ path from $p_1$ to $p_3$ via $p_2$.}
\label{fig:RS}
\end{figure}

\begin{figure}[h]
\centering{}
\includegraphics[scale=0.9]{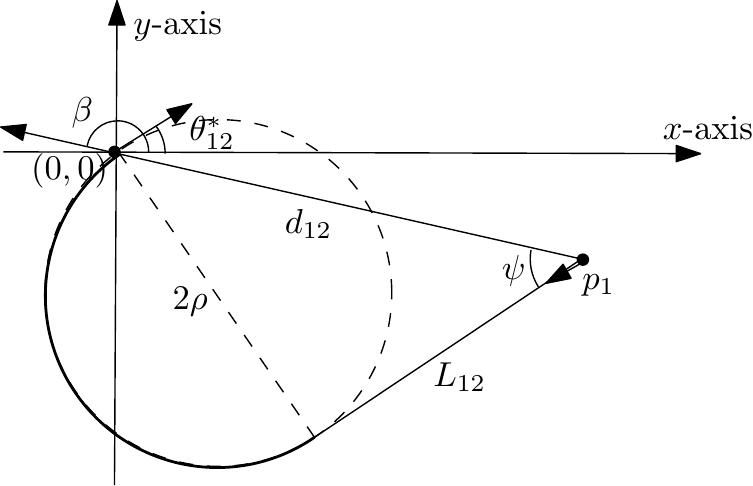}
\caption{$SR$ path from $p_1$ to $p_2$. Note that when $\theta=\theta_{12}^*$, $d_{12}\sin{\psi}$ reaches a maximum value of $2\rho$. As $\theta$ is increased further to $\beta$, $d_{12}\sin{\psi}$ reduces to zero.}
\label{fig:RS_3}
\end{figure}

\begin{figure}[h]
\centering{}
\includegraphics[scale=0.9]{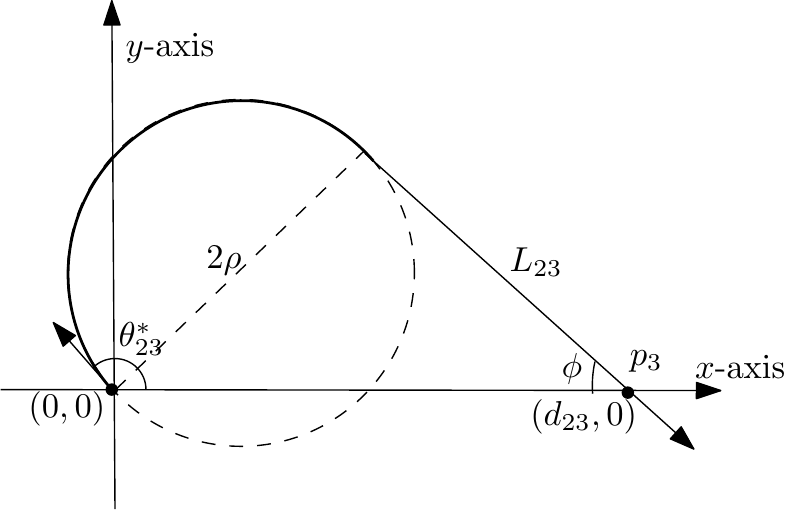}
\caption{$RS$ path from $p_2$ to $p_3$. Note that when $\theta=0$, $d_{12}\sin{\phi}=0$. As $\theta$ is increased further to $\theta_{23}^*$, $d_{12}\sin{\psi}$ increases and reaches its maximum value which is equal to $2\rho$.}
\label{fig:RS_2}
\end{figure}

The following lemma proves an additional property about the nature of the optimal paths for the 3 point problem. This property is useful for numerical implementations and can be used as a termination criteria. Again, the following result is shown for the $SRS$ path. $SLS$ path can be handled in a similar way. \\

\begin{lemma}
For the optimal SRS path when $y_1\leq 0$, the angle of turn in the curved segment from point $p_1$ to $p_2$ is equal to the angle of turn in the curved segment from $p_2$ to $p_3$. \\
\end{lemma}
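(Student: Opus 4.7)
The plan is to deduce the equality of turn angles from the first-order optimality condition already derived in the preceding theorem, by re-expressing each side of that condition purely in terms of the arc angles. Let $\alpha_1$ denote the turn angle of the $SR$ sub-path from $p_1$ to $p_2$ and $\alpha_2$ the turn angle of the $RS$ sub-path from $p_2$ to $p_3$. The theorem's derivation has already established that at the unique interior minimum of $\mathfrak{D}_1(\theta)+\mathfrak{D}_2(\theta)$ one has $d_{12}\sin\psi = d_{23}\sin\phi$, so the task reduces to a geometric rewriting.

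The key identity I would establish is
\begin{equation*}
d_{12}\sin\psi \;=\; \rho(1-\cos\alpha_1) \;=\; 2\rho\sin^{2}(\alpha_1/2),
\end{equation*}
together with the analogue $d_{23}\sin\phi = 2\rho\sin^{2}(\alpha_2/2)$. To obtain the first identity I would adopt a local frame with $p_1$ at the origin and the straight portion of the $SR$ sub-path along the positive $x$-axis. Then the touchdown point of the straight on the arc is $T_1=(L_{12},0)$, the right-turn centre is at $(L_{12},-\rho)$, and rotating $T_1$ clockwise through $\alpha_1$ about this centre places $p_2$ at $(L_{12}+\rho\sin\alpha_1,\,-\rho(1-\cos\alpha_1))$. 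Since $\psi$ is the angle at $p_1$ between the straight $p_1T_1$ and the chord $p_1p_2$ (as used implicitly in the theorem proof via the relation $d_{12}\cos\psi > L_{12}$), its sine equals the absolute value of the $y$-coordinate of $p_2$ divided by $d_{12}$, yielding the identity. A mirror computation at $p_3$ produces the corresponding identity in terms of $\alpha_2$.

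Substituting both identities into $d_{12}\sin\psi = d_{23}\sin\phi$ collapses the stationarity condition to $\sin^{2}(\alpha_1/2)=\sin^{2}(\alpha_2/2)$. Because $d_{12}\geq 2\rho$ and $d_{23}\geq 2\rho$, the turn angle of any shortest $SR$ or $RS$ sub-path lies in $[0,\pi]$ (a standard property of Dubins $CS$/$SC$ paths between well-separated endpoints), so both half-angles lie in $[0,\pi/2]$, a range on which $\sin$ is strictly increasing. Therefore $\alpha_1=\alpha_2$, as claimed.

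I expect the main obstacle to be the bookkeeping needed to pin down the precise geometric meaning of $\psi$ and $\phi$ from the figures referenced in the theorem's proof and to track signs carefully, so that the stationarity condition collapses to equality of the half-angle sines rather than to a spurious supplementary relation. The $[0,\pi]$ restriction on the arc angles of shortest $SR$ and $RS$ sub-paths removes the remaining ambiguity cleanly, and a parallel argument treats the $SLS$ case for $y_1\geq 0$.
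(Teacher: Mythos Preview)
Your proposal is correct and follows essentially the same route as the paper: both rewrite the stationarity condition $d_{12}\sin\psi=d_{23}\sin\phi$ via the identity $d\sin(\cdot)=\rho(1-\cos\alpha)$ to obtain $\cos\alpha_1=\cos\alpha_2$, and then eliminate the spurious branch. The only notable difference is in that last step---you appeal to the range restriction $\alpha_1,\alpha_2\in[0,\pi]$ (where cosine is injective), while the paper instead argues that the alternative $\alpha_1+\alpha_2=2\pi$ would force $y_1>0$, contradicting the hypothesis; your explicit coordinate derivation of the key identity is also more self-contained than the paper's, which cites an external reference for it.
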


\begin{proof}
Refer to figure \ref{fig:RS}. Using the result in \cite{Rathinam_lb2015}, we get,
\begin{align}
\frac{d\mathfrak{D_2}}{d\theta} & = {d_{23}}\sin{\phi} \nonumber
& = \rho - \rho\cos{(\theta+\phi)} \nonumber
& = \rho - \rho \cos(turn_{23}) \nonumber
\end{align}
where $turn_{23}= \theta+\phi$ is the turn angle of the curved segment from $p_2$ to $p_3$.
Similarly,
\begin{align}
\frac{d\mathfrak{D_1}}{d\theta} = -{d_{12}}\sin{\psi} \nonumber
& = -(\rho - \rho\cos{(turn_{12}))} \nonumber
\end{align}
where $turn_{12}$ is the turn angle of the curved segment from $p_1$ to $p_2$. \\

Therefore, $\frac{d\mathfrak{D_1}}{d\theta} + \frac{d\mathfrak{D_2}}{d\theta}=0$ implies that $\cos(turn_{12}) = \cos{(turn_{23})}$. Either $turn_{12}=turn_{13}$ or $turn_{12}+turn_{13}=2\pi$. But $turn_{12}+turn_{13}$ cannot be equal to $2\pi$ because this would imply that $y_1 > 0$ which is not true. Therefore, $turn_{12}=turn_{13}$. $\blacksquare$
\end{proof}

\section{Approximation algorithm for $n$ points}

The approximation algorithm first finds three feasible solutions for the CSP and then chooses the best of these three solutions. The three solutions are constructed in the following way:
\begin{enumerate}
\item {\bf Solution \boldmath${\mathcal{F}_1}$}: Use the three point algorithm to find a path for each of the following sequences of points: $(p_1,p_2,p_3), (p_4,p_5,p_6), \cdots, (p_{3k-2},p_{3k-1},p_{3k})$. These paths will fix the heading angle of the vehicle at each of the points. Now, use these heading angles to find the shortest Dubins path from $p_3$ to $p_4$, $p_6$ to $p_7$ and so on. Concatenate all these Dubins paths along with the paths obtained using the three point algorithm such that the resulting solution ($\mathcal{F}_1$) visits the points in the given sequence. \\
\item {\bf Solution \boldmath${\mathcal{F}_2}$}: Use the three point algorithm to find a path for each of the following sequences of points: $(p_2,p_3,p_4), (p_5,p_6,p_7), \cdots, (p_{3k-4},p_{3k-3},p_{3k-2})$. In addition, join points $p_{3k-1}$ and $p_{3k}$ using a line segment. These paths will fix the heading angle of the vehicle at each of the points except $p_1$. Choose any arbitrary angle for visiting $p_1$. Now, use these heading angles to find the shortest Dubins path from $p_1$ to $p_2$, $p_4$ to $p_5$ and so on. Concatenate all these Dubins paths along with the paths obtained using the three point algorithm and the line segment such that the resulting solution ($\mathcal{F}_2$) visits the points in the given sequence. \\
\item {\bf Solution \boldmath${\mathcal{F}_3}$}: Use the three point algorithm to find a path for each of the following sequences of points: $(p_3,p_4,p_5), (p_6,p_7,p_8), \cdots, (p_{3k-3},p_{3k-2},p_{3k-1})$. In addition, join points $p_{1}$ and $p_{2}$ using a line segment. These paths will fix the heading angle of the vehicle at each of the points except $p_{3k}$. Choose any arbitrary angle for visiting $p_{3k}$. Now, use these heading angles to find the shortest Dubins path from $p_2$ to $p_3$, $p_5$ to $p_6$ and so on. Concatenate all these Dubins paths along with the paths obtained using the three point algorithm such that the resulting solution ($\mathcal{F}_3$) visits the points in the given sequence.
\end{enumerate}

Among these three solutions, the approximation algorithm chooses a solution ({\boldmath$\mathcal{F}_a$}) that corresponds to the least of the lengths of $\mathcal{F}_1$, $\mathcal{F}_2$ and $\mathcal{F}_3$. Let $cost(\mathcal{F})$ denote the length of any solution $\mathcal{F}$. The algorithm runs in polynomial time as it primarily relies on solving the three point problems which can be solved in polynomial time for given $\epsilon>0$. The following theorem shows the approximation factor of the algorithm: \\

\begin{theorem}\label{}
Let $\mathcal{F}_{opt}$ and {$\mathcal{F}_a$} respectively denote an optimal solution to the CSP and the solution obtained by the approximation algorithm. Consider any constant $\epsilon>0$. The length of the solution $\mathcal{F}_a$ is at most equal to $(1+\frac{\pi}{3} + \epsilon)$ times the length of $\mathcal{F}_{opt}$. That is, $cost(\mathcal{F}_a) \leq (1+\frac{\pi}{3} + \epsilon)cost(\mathcal{F}_{opt})$. \\
\end{theorem}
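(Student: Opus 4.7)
The plan is to bound each of $cost(\mathcal{F}_1)$, $cost(\mathcal{F}_2)$, and $cost(\mathcal{F}_3)$ individually in terms of the Dubins sub-arcs of $\mathcal{F}_{opt}$, and then to conclude via the averaging observation $cost(\mathcal{F}_a) \leq \tfrac{1}{3}\sum_{j=1}^{3} cost(\mathcal{F}_j)$. To set up, decompose $\mathcal{F}_{opt}$ into its $n-1$ Dubins edges and let $L_i$ denote the length of the sub-arc from $p_i$ to $p_{i+1}$, so that $cost(\mathcal{F}_{opt}) = \sum_i L_i$ and $L_i \geq d_{i,i+1} \geq 2\rho$ for every $i$. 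Two ingredients will drive the bound. First, for any triple $(p_a, p_{a+1}, p_{a+2})$ processed by the three-point algorithm inside some $\mathcal{F}_j$, the heading angles that $\mathcal{F}_{opt}$ uses at those three points produce a feasible (possibly suboptimal) triple path of length $L_a + L_{a+1}$; by the previous theorem the three-point algorithm therefore returns a path of length at most $(1+\delta)(L_a + L_{a+1})$ whenever its tolerance is set to $\delta>0$. Second, any Dubins connection from $p_i$ to $p_{i+1}$ with $d_{i,i+1}\geq 2\rho$ has length at most $d_{i,i+1} + 2\pi\rho$ uniformly in the two boundary headings; this is the standard $CSC$-based geometric bound, and it applies a fortiori when one of the two headings is left free (as at the start of $\mathcal{F}_2$ and the end of $\mathcal{F}_3$). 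The two line segments used in $\mathcal{F}_2$ and $\mathcal{F}_3$ contribute only $d_{i,i+1} \leq L_i$, with no overhead.

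Next, the plan is to count how many times each $L_i$ is charged across the three solutions. A residue-modulo-$3$ inspection of the starting index $i$ shows that, for every interior $i$, the edge $(p_i,p_{i+1})$ lies inside a triple in exactly two of $\{\mathcal{F}_1, \mathcal{F}_2, \mathcal{F}_3\}$ and plays a connecting role in the remaining one; the two boundary edges $L_1$ and $L_{n-1}$ follow the same pattern, with one of their appearances being a line segment or a free-heading Dubins, each dominated by the generic $L_i + 2\pi\rho$ charge. Splitting each triple bound $(1+\delta)(L_i + L_{i'})$ between its two constituent edges attributes at most $(1+\delta)L_i$ to $L_i$ per inside-triple appearance, so summing over $i$ and using $2\rho(n-1) \leq \sum_i L_i = cost(\mathcal{F}_{opt})$ will yield
\begin{equation*}
\sum_{j=1}^{3} cost(\mathcal{F}_j) \;\leq\; (3+2\delta)\,cost(\mathcal{F}_{opt}) + 2\pi\rho(n-1) \;\leq\; (3 + \pi + 2\delta)\,cost(\mathcal{F}_{opt}).
\end{equation*}

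The averaging observation then produces $cost(\mathcal{F}_a) \leq (1 + \tfrac{\pi}{3} + \tfrac{2\delta}{3})\,cost(\mathcal{F}_{opt})$, and choosing $\delta = \tfrac{3\epsilon}{2}$ delivers the stated guarantee. The step that will be the main obstacle is establishing the Dubins connection bound $d + 2\pi\rho$: this is precisely where the constant $\pi/3$ is earned, and it requires a geometric case analysis of $CSC$ paths (with $CCC$ handled separately when $d$ is close to $2\rho$) to verify that between any two fixed configurations at inter-point distance at least $2\rho$ a path of length at most $d + 2\pi\rho$ always exists. Once this lemma is in hand, the residue-mod-$3$ accounting together with the special treatment of the two boundary edges in $\mathcal{F}_2$ and $\mathcal{F}_3$ is routine bookkeeping.
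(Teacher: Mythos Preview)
Your approach is essentially the paper's: bound each $\mathcal{F}_j$ by splitting it into triple pieces (each at most $(1+\epsilon)$ times the corresponding portion of $\mathcal{F}_{opt}$) and connector edges (bounded via a Dubins upper bound), and then exploit the fact that across the three phasings every edge $(p_i,p_{i+1})$ is a connector exactly once. The only noteworthy difference is the form of the connector bound. The paper uses the \emph{multiplicative} estimate $cost(\text{Dubins edge}) \le (1+\pi)\,d_{i,i+1} \le (1+\pi)\,cost(\mathcal{F}_{opt}(p_i,p_{i+1}))$, cited directly from the literature, which immediately gives $cost(\mathcal{F}_j) \le (1+\epsilon)\,cost(\mathcal{F}_{opt}) + \pi L_j$ with $L_1+L_2+L_3 = cost(\mathcal{F}_{opt})$ and hence $\min_j L_j \le \tfrac{1}{3}cost(\mathcal{F}_{opt})$. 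So the step you flag as the ``main obstacle'' (a geometric $CSC$/$CCC$ case analysis to prove $d+2\pi\rho$) is not needed at all; the cited $(1+\pi)d$ bound does the job in one line, and your averaging-plus-$2\rho(n-1)\le cost(\mathcal{F}_{opt})$ detour, while correct, is a longer path to the same inequality.
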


\begin{proof}
For $i,j=1,\cdots,n$, $i<j$, let $\mathcal{F}(p_{i},p_{j})$ denote the part of the solution $\mathcal{F}$ from point $p_i$ to point $p_j$. Hence, $cost(\mathcal{F}(p_{i},p_{j}))$ essentially denotes the length of the part of the solution $\mathcal{F}$ from $p_i$ to $p_j$. Also, let $d_{i,j}$ represent the Euclidean distance between points $p_i$ and $p_j$.

\begin{align}\label{bound}
Cost(\mathcal{F}_1) & = \sum_{i=1}^k cost(\mathcal{F}_1(p_{3i-2},p_{3i})) + \sum_{i=1}^{k-1}cost(\mathcal{F}_1(p_{3i},p_{3i+1})).
\end{align}

By construction, for any $i=1,\cdots,k$, the three point algorithm finds a path of bounded curvature with an approximation guarantee of $(1+\epsilon)$ from $p_{3i-2}$ to $p_{3i}$. Therefore, the length of this path must be at most equal to $(1+\epsilon)$ times length of the part of the path $\mathcal{F}_{opt}$ from $p_{3i-2}$ to $p_{3i}$. Hence,
\begin{equation}\label{bound1}
cost(\mathcal{F}_1(p_{3i-2},p_{3i})) \leq (1+\epsilon) cost(\mathcal{F}_{opt}(p_{3i-2},p_{3i})).
\end{equation}
Similarly, a shortest Dubins path is constructed from $p_{3i}$ to $p_{3i+1}$ for all $i=1,\cdots,k-1$. Using the bound shown in \cite{Kim2012}, we get,

\begin{align}\label{bound2}
cost(\mathcal{F}_1(p_{3i},p_{3i+1})) & \leq (1+\pi) d_{3i,3i+1} \nonumber \\
& \leq (1+\pi)cost(\mathcal{F}_{opt}(p_{3i},p_{3i+1})).
\end{align}

Substituting for the bounds from equations \eqref{bound1} and \eqref{bound2} in \eqref{bound}, we get,
{\small
\begin{align}
& cost(\mathcal{F}_1) \nonumber \\
& \leq (1+\epsilon)\sum_{i=1}^k cost(\mathcal{F}_{opt}(p_{3i-2},p_{3i})) + (1+\pi)\sum_{i=1}^{k-1}cost(\mathcal{F}_{opt}(p_{3i},p_{3i+1})) \nonumber \\
& =\underbrace{(1+\epsilon) \sum_{i=1}^k cost(\mathcal{F}_{opt}(p_{3i-2},p_{3i})) + \sum_{i=1}^{k-1}cost(\mathcal{F}_{opt}(p_{3i},p_{3i+1}))}_{\leq (1+\epsilon) cost(\mathcal{F}_{opt}) }  \nonumber \\ & \quad + \pi \sum_{i=1}^{k-1}cost(\mathcal{F}_{opt}(p_{3i},p_{3i+1})) \nonumber \\
& = (1+\epsilon) cost(\mathcal{F}_{opt}) + \pi \sum_{i=1}^{k-1}cost(\mathcal{F}_{opt}(p_{3i},p_{3i+1})). \label{eq:f1}
\end{align}
}

Similarly, we can also bound the length of solutions $\mathcal{F}_2$ and $\mathcal{F}_3$ as follows:
\begin{align}\label{eq:f2}
& cost(\mathcal{F}_2) \leq (1+\epsilon)cost(\mathcal{F}_{opt}) + \pi \sum_{i=1}^{k}cost(\mathcal{F}_{opt}(p_{3i-2},p_{3i-1})).
\end{align}
\begin{align}\label{eq:f3}
& cost(\mathcal{F}_3) \leq (1+\epsilon)cost(\mathcal{F}_{opt}) + \pi \sum_{i=1}^{k}cost(\mathcal{F}_{opt}(p_{3i-1},p_{3i})).
\end{align}

We use the following notations to simplify the terms that appear in equations \eqref{eq:f1},\eqref{eq:f2} and \eqref{eq:f3}. Let,

\begin{align*}
L_1= \sum_{i=1}^{k-1}cost(\mathcal{F}_{opt}(p_{3i},p_{3i+1})). \\
L_2 = \sum_{i=1}^{k}cost(\mathcal{F}_{opt}(p_{3i-2},p_{3i-1})). \\
L_3 = \sum_{i=1}^{k}cost(\mathcal{F}_{opt}(p_{3i-1},p_{3i})).
\end{align*}

One can verify that $L_1 + L_2 + L_3 = cost(\mathcal{F}_{opt})$. Therefore, as $L_1,L_2,L_3\geq 0$, we obtain,
\begin{align}
\min{(L_1,L_2,L_3)}\leq \frac{1}{3}cost(\mathcal{F}_{opt}).
\end{align}

Combining all the bounds for the length of the three solutions and the above equation, we get,
\begin{align*}
&cost(\mathcal{F}_{a})  = \min{(cost(\mathcal{F}_1),cost(\mathcal{F}_2),cost(\mathcal{F}_3)) } \nonumber \\
& \leq (1+\epsilon)cost(\mathcal{F}_{opt})  + \pi \min{(L_1,L_2,L_3)} \\
& \leq (1+\frac{\pi}{3} +\epsilon) cost(\mathcal{F}_{opt}). 
\end{align*}
$\blacksquare$
\end{proof}

\section{Numerical Results}

The approximation algorithm was implemented on problem instances with 12, 15, 18, 21, 24, 27 and 30 target points. For a given number of target points, we generated 100 instances. The turning radius of the vehicle was set to $100$ units. The coordinates of the points were uniformly randomly generated on a 2D plane with the requirement that the minimum distance between any two adjacent points in the sequence is at least twice the turning radius of the vehicle. \\

For a given problem instance $I$, the bound on the {\it a posterior guarantee} provided by the approximation algorithm is defined as $\frac{C^I_{approx}}{C^I_{lb}}$ where $C^I_{approx}$ is the cost of the feasible solution found by the approximation algorithm and $C^I_{lb}$ is the lower bound on the length of the shortest path for the CSP. The parameter $\epsilon$ in the approximation algorithm was set to $10^{-4}$. The lower bound for each instance was obtained using the procedure outlined in \cite{Rathinam_lb2015}. This lower bounding algorithm involved discretizing the set of possible heading angles $[0,2\pi]$ at each point into 32 uniform intervals (refer to \cite{Rathinam_lb2015} for more details on this computation). All the algorithms were coded in MATLAB and the computations were run on a Dell Precision Workstation (Intel Xeon Processor @2.53 GHz, 12 GB RAM). The average running time of the approximation algorithm was in the order of a second for all the instances. \\

The max and average {\it a posterior guarantee} of the solutions found by the approximation algorithm is shown in table \ref{tab:1} along with the theoretical approximation guarantee. Even though the theoretical guarantee is 2.04, the numerical results show that the max and average {\it a posterior guarantee}  was less than 1.28 and 1.15 respectively for the considered instances. These results imply that the proposed algorithm produced solutions with bounds that are significantly better than the guarantees indicated by the approximation factor. The solutions obtained by the approximation algorithm for an instance is shown in figure \ref{fig:path}.

\begin{table}[h!]
  \scriptsize
\caption{Comparison of the theoretical and numerical results for the approximation algorithm}
\vspace{.1cm}
\label{tab:1}
\begin{center}
\begin{tabular}{cccc}
\hline \noalign{\smallskip}
{No. of Points} & \parbox[c][0.65cm]{2cm}{\centering Theoretical upper bound}  & \parbox[c][0.65cm]{2cm}{\centering Max a posteriori bound}  & \parbox[c][0.65cm]{2cm}{\centering Average a posteriori bound} \\
  \hline \noalign{\smallskip}
   12 & 2.04 & 1.27  & 1.09   \\
   15 & 2.04 & 1.25 &1.09   \\
   18 & 2.04 & 1.24 & 1.11  \\
   21 & 2.04 & 1.22 & 1.11  \\
   24 & 2.04 & 1.24 & 1.11 \\
   27 & 2.04 & 1.23 & 1.13 \\
   30 & 2.04 & 1.20 & 1.12 \\
   \hline \noalign{\smallskip}
   \end{tabular}
\end{center}
\end{table}

\begin{figure}
  \centering
  \subfigure[\boldmath${\mathcal{F}_1}$]{\includegraphics[scale=.6]{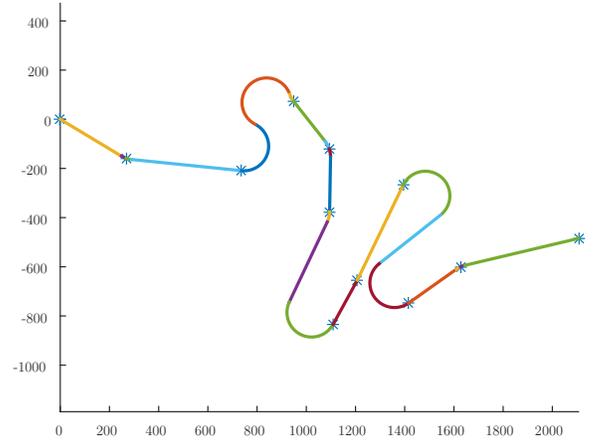}
\label{approxv:step1}}
  \subfigure[\boldmath${\mathcal{F}_2}$]{\includegraphics[scale=.6]{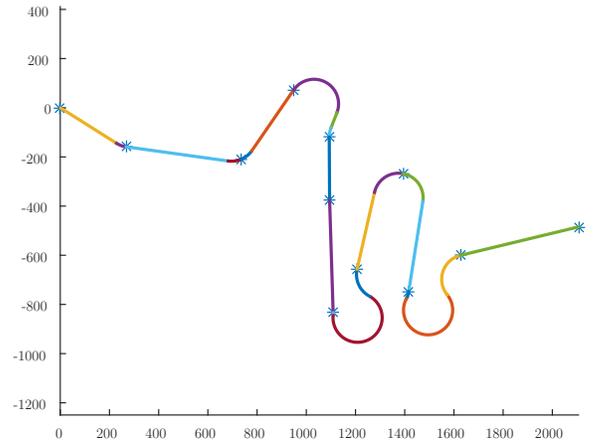}
\label{approxv:step1}}
  \subfigure[\boldmath${\mathcal{F}_3}$]{\includegraphics[scale=.6]{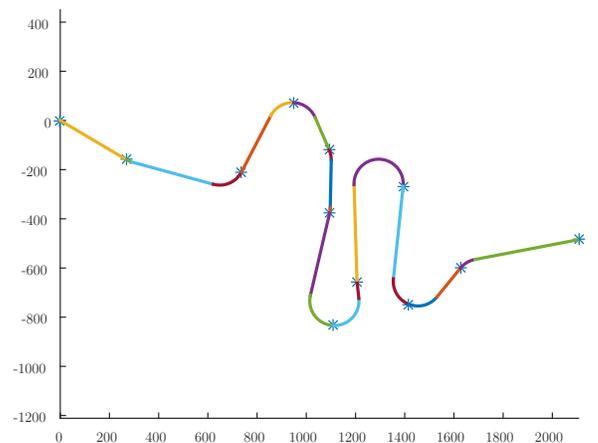}
\label{approxv:step1}}
\caption{The three solutions obtained by the algorithm for a problem instance with 12 points.}
\label{fig:path}
\end{figure}

\section{Conclusions}
This article provided a new approximation algorithm for a curvature constrained shortest path problem (CSP) for visiting a given sequence of target points. First, an algorithm that finds a good path for 3 points is presented. Next, the given sequence is broken into subsequences with 3 points in each subsequence, and a solution is obtained for each subsequence using the three point algorithm. Finally, the solutions corresponding to all the subsequences are concatenated in a suitable way to find a feasible solution to the SPP. This basic idea can be improved in several directions to provide better approximation guarantees for the CSP. First, if one can solve a four point problem with an approximation factor of $(1+\epsilon)$ for some small $\epsilon>0$, using the ideas presented in this article, a guarantee of $(1+\frac{\pi}{4}+\epsilon)$ can be obtained for the CSP. Second, the bounds presented in this article are mainly based on the Euclidean distances with no consideration given to the angle of turn of the vehicle as it passes the points. Including the turn angles has a potential to reduce the approximation factor further. In fact, the algorithm provided in \cite{LeeDubins} uses the constraints on the turn angles in a clever way to provide a constant factor approximation guarantee. We believe that a combination of the ideas presented in this article with lower bounds that are computed based on the Euclidean distances and the turn angles can improve the approximation guarantees significantly even for the more general case when adjacent points are closed spaced.

\bibliographystyle{asmems4}
\bibliography{dubseqbib}

\end{document}